\def\eqref#1{equation~\ref{#1}}
\def\1{\bm{1}}
\def\vzero{{\bm{0}}}
\def\vone{{\bm{1}}}
\def\vu{{\bm{u}}}
\def\vv{{\bm{v}}}
\def\mA{{\bm{A}}}
\def\mB{{\bm{B}}}
\def\mD{{\bm{D}}}
\def\mM{{\bm{M}}}
\def\mU{{\bm{U}}}
\def\mV{{\bm{V}}}
\def\mW{{\bm{W}}}
\def\mSigma{{\bm{\Sigma}}}
\DeclareMathAlphabet{\mathsfit}{\encodingdefault}{\sfdefault}{m}{sl}
\SetMathAlphabet{\mathsfit}{bold}{\encodingdefault}{\sfdefault}{bx}{n}
\def\sR{{\mathbb{R}}}
\definecolor{zwhe}{RGB}{223, 97, 76}
\definecolor{nred}{RGB}{196, 38, 11}
\definecolor{nblue}{RGB}{41, 52, 190}
\definecolor{ngreen}{RGB}{18, 141, 21}
\definecolor{LB}{RGB}{59, 130, 220}
\newcommand{\zptu}[1]{\textcolor{ngreen}{~(zptu: #1)}}
\crefname{section}{\S}{\S\S}
\Crefname{section}{\S}{\S\S}
\theoremstyle{plain}
\theoremstyle{definition}
\theoremstyle{remark}
\newcommand*{\@rowstyle}{}
\newcommand*{\rowstyle}[1]{
  \gdef\@rowstyle{#1}%
  \@rowstyle\ignorespaces%
}
\newcolumntype{=}{
  >{\gdef\@rowstyle{}}%
}
\newcolumntype{+}{
  >{\@rowstyle}%
}
\title{RaSA: Rank-Sharing Low-Rank Adaptation}
\author{Zhiwei He$^1$\thanks{Work was done when Zhiwei He, Xingyu Chen, and Zhijie Wang were interning at Tencent AI Lab.} \quad Zhaopeng Tu$^2$\thanks{Zhaopeng Tu and Rui Wang are co‐corresponding authors.} \quad Xing Wang$^2$ \qquad\ \  Xingyu Chen$^{1*}$ \qquad\hspace{0.5em} Zhijie Wang$^{1*}$\\
\bf Jiahao Xu$^2$ \quad Tian Liang$^2$ \qquad Wenxiang Jiao$^2$ \quad Zhuosheng Zhang$^1$ \quad Rui Wang$^1$$^\dagger$\\
$^1$Shanghai Jiao Tong University \quad $^2$Tencent AI Lab\\
$^1$\texttt{{\{zwhe.cs,galaxychen,violetevergarden,zhangzs,wangrui12\}}@sjtu.edu.cn} \\
$^2$\texttt{{\{zptu,brightxwang,jettexu,ttianliang,joelwxjiao\}}@tencent.com}
}
\begin{document}

\maketitle
\begin{abstract}

Low-rank adaptation (LoRA) has been prominently employed for parameter-efficient fine-tuning of large language models (LLMs). However, the limited expressive capacity of LoRA, stemming from the low-rank constraint, has been recognized as a bottleneck, particularly in rigorous tasks like code generation and mathematical reasoning. To address this limitation, we introduce Rank-Sharing Low-Rank Adaptation (RaSA), an innovative extension that enhances the expressive capacity of LoRA by leveraging partial rank sharing across layers. By forming a shared rank pool and applying layer-specific weighting, RaSA effectively increases the number of ranks without augmenting parameter overhead. Our theoretically grounded and empirically validated approach demonstrates that RaSA not only maintains the core advantages of LoRA but also significantly boosts performance in challenging code and math tasks. Code, data and scripts are available at: \url{https://github.com/zwhe99/RaSA}.

\end{abstract}
\section{Introduction}
\label{sec:introduction}

Low-rank adaptation (LoRA, \citet{hu2022lora}) has become a de facto parameter-efficient fine-tuning (PEFT) method for adapting large language models (LLMs) to specific downstream tasks.
Its core idea is to constrain the parameter updates to be low-rank, which significantly reduces the number of trainable parameters and allows them to be merged back into the original model, thereby avoiding additional inference latency.
Despite its advantages, recent studies have shown that LoRA still lags behind full fine-tuning (FFT), particularly in scenarios involving large training datasets and complex tasks such as mathematical reasoning and code generation~\citep{jiang2024mora,biderman2024lora}.
A plausible explanation for this performance gap is that the low-rank constraint limits the expressive capacity of LoRA.
For instance, \citet{biderman2024lora} empirically found that the effective rank required for FFT is 10-100× higher than typical LoRA configuration, and \citet{zeng2024the} theoretically demonstrated that a Transformer network~\citep{transformer} requires a rank at least half the size of the model dimension to approximate another model of similar size.

Although the limited number of trainable parameters results in limited expressive capacity, recent studies still indicate redundancy in LoRA's parameters.
For example, \citet{kopiczko2024vera,song2024sharelora,renduchintala-etal-2024-tied,li2024vb} further reduced the number of LoRA's parameters by sharing them across layers and modules with only slight performance loss. \citet{brüelgabrielsson2024compressserveservingthousands} compressed 1,000 LoRAs trained from different tasks by sharing their parameter spaces.
This contradiction suggests that LoRA's parameters are still not being fully utilized.

Combining the above two observations, we propose {\bf Ra}nk-{\bf S}haring Low-Rank {\bf A}daptation (RaSA), an approach that boosts the expressive capacity of LoRA by enabling partial rank sharing across layers.
Specifically, given an LLM with $L$ layers, RaSA extracts $k$ ranks from each layer's LoRA update to form a rank pool of $L\times k$ ranks, which is shared across all layers with layer-specific weighting.
RaSA retains the core advantages of LoRA -- keeping the same parameter overhead and allowing for easy merging back into the model.
Moreover, since modern LLMs typically have deep architectures (i.e., large $L$), RaSA greatly increase the effective rank of the parameter update by $(L-1) \times k$.

However, a higher rank does not necessarily lead to better expressive capacity.
To rigorously assess the benefits of RaSA, we analyze its capacity to reconstruct high-rank matrices compared to LoRA.
Theoretically, we prove that RaSA's minimum reconstruction error is bounded by that of LoRA.
Empirically, we show that when $k$ is relatively small, RaSA can be easily optimized to achieve a significantly lower reconstruction error than LoRA.
Finally, we conducted experiments on mathematical reasoning and code generation, demonstrating that the lower reconstruction error translates to improved downstream task performance.

Our contributions are summarized as follows:
\begin{itemize}
    \item We propose RaSA, a novel extension of LoRA by by allowing partial rank sharing across layers, which significantly improves the method's efficiency and expressiveness (\Cref{sec:method}).
    \item We provide a comprehensive analysis -- both theoretical and empirical -- showcasing RaSA's superior capacity for matrix reconstruction (\Cref{sec:reconstruction-error-analysis}) and its resultant improved performance on rigorous downstream tasks (e.g. code and math) (\Cref{sec:experiment}).
\end{itemize}

\section{Method}
\label{sec:method}
\begin{figure}[htpb]
    \centering
    \includegraphics[width=0.7\linewidth]{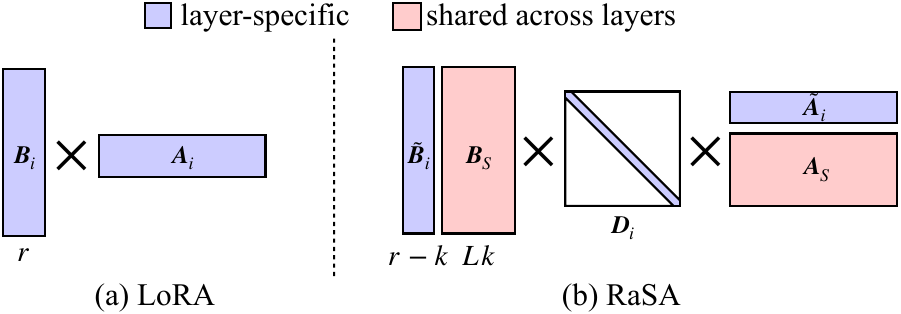}
    \caption{Decomposition of the update matrix $\Delta\mW_i$ in LoRA and RaSA, where $i$ is the layer index.}
    \label{fig:rasa-method}
\end{figure}
\subsection{Formulation}
Given a pre-trained weight matrix $\mW\in\sR^{b \times a}$, LoRA constrains its update to a low-rank form by decomposing the update matrix $\Delta\mW\in\sR^{b \times a}$ into a product of two rank-$r$ matrices:
\begin{equation}
    \label{eq:lora-update}
    \mW + \Delta\mW = \mW + \frac{\alpha}{r}\mB\mA \quad (\mB\in\sR^{b\times r}, \mA\in\sR^{r\times a}),
\end{equation}
where rank $r \ll \min(b,a)$ serves as a bottleneck
dimension, reducing the number of trainable parameters, and $\alpha$ is a scaling factor.
In an LLM with $L$ layers, LoRA assigns distinct trainable matrices to each layer-$i$: $\{\mB_{i}\mA_{i}\}_{i\in[L]}$ (\figurename~\ref{fig:rasa-method}(a)).
RaSA, on the other hand, mitigates the low-rank bottleneck of LoRA through rank sharing.
Specifically, RaSA takes out $k$ ranks in each layer and shares them across all layers.
This process can be conceptualized as follows:
\begin{enumerate}[topsep=0pt, partopsep=0pt,itemsep=0pt,parsep=0pt,leftmargin=20pt]
    \item Split the matrices $\mB_i$ and $\mA_i$ into layer-specific parts ($\tilde{\mB}_{i}$, $\tilde{\mA}_{i}$) and layer-shared parts ($\hat{\mB}_{i}$, $\hat{\mA}_{i}$):
        \begin{equation}
            \mB_{i} = [\underbrace{\tilde{\mB}_{i}}_{\sR^{b \times (r-k)}} \quad \underbrace{\hat{\mB}_{i}}_{\sR^{b \times k}}], \quad \mA_{i} = [\underbrace{\tilde{\mA}_{i}^{T}}_{\sR^{a \times (r-k)}} \quad \underbrace{\hat{\mA}_{i}^{T}}_{\sR^{a \times k}}]^{T}. 
        \end{equation}

    \item Concatenate the layer-shared parts across all layers to form shared rank pools ($\mB_{S}$ and $\mA_{S}$):
    \begin{equation}
        \mB_{S} = \begin{bmatrix}\hat{\mB}_{1} & \hat{\mB}_{2} & \cdots & \hat{\mB}_{L}\end{bmatrix}\in\sR^{b \times (L\times k)}, \quad
        \mA_{S} = \begin{bmatrix}\hat{\mA}_{1}^{T} & \hat{\mA}_{2}^{T} & \cdots & \hat{\mA}_{L}^{T}\end{bmatrix}^{T}\in\sR^{(L\times k) \times a}. 
    \end{equation}
\end{enumerate}
Therefore, the update for layer-$i$ is given by:
\begin{equation}
    \begin{aligned}
    \mW_i + \Delta\mW_i &= \mW_i + \frac{\alpha}{r}(\tilde{\mB_i}\tilde{\mA_i} + \mB_S\mA_S)\\
    &= \mW_i +
    \begin{bmatrix}
        \tilde{\mB}_i & \mB_S
    \end{bmatrix}
    \mathrm{diag}(\frac{\alpha}{r})
    \begin{bmatrix}
        \tilde{\mA}_i \\
        \mA_S 
    \end{bmatrix}. 
    \end{aligned}
\end{equation}
To enable layer-specific weighting, we replace the constant diagonal matrix with a trainable diagonal matrix $\mD_i=\mathrm{diag}(d_{1}, d_{2}, \cdots, d_{j}, \cdots, d_{r-k+Lk})$, yielding the final RaSA update (\figurename~\ref{fig:rasa-method}(b)):
\begin{equation}
    \label{eq:rasa-update}
    \begin{aligned}
    \mW_i + \Delta\mW_i &= \mW_i +
    \underbrace{
    \begin{bmatrix}
        \tilde{\mB}_i & \mB_S
    \end{bmatrix}}_{\sR^{b \times (r-k+Lk)}}
    \mD_i
    \underbrace{
    \begin{bmatrix}
        \tilde{\mA}_i \\
        \mA_S 
    \end{bmatrix}}_{\sR^{(r-k+Lk) \times a}}.
    \end{aligned}
\end{equation}

\subsection{Analysis \& Implementation Details}

\paragraph{Rank of $\Delta \mW$} Comparing \Cref{eq:rasa-update,eq:lora-update}, RaSA increases the rank of $\Delta \mW$ from $r$ to $r-k+Lk$.
Since modern LLMs are deep, RaSA significantly boosts the model's expressive capacity by enabling a higher effective rank, on which we have a detailed discussion in~\cref{sec:reconstruction-error-analysis}.
Each layer in RaSA maintains the same rank for $\Delta \mW$, which sets it apart from methods that dynamically assign ranks across layers, such as AdaLoRA~\citep{zhang2023adaptive} and PriLoRA~\citep{benedek-wolf-2024-prilora}.
\vspace{-9pt}
\paragraph{Additional Parameters} RaSA introduces the diagonal matrix $\mD_i$ as additional parameters.
Since $\mD_i$ is diagonal and operates only at the bottleneck dimension, the added parameters are negligible.
In practice, $\mD_i$ contributes to less than 0.001\% of the total model parameters.
\vspace{-9pt}
\paragraph{Initialization} Following LoRA, we use Kaiming initialization~\citep{kaiming2015init} for $\tilde{\mA}_i$ and $\mA_S$, and initialize $\tilde{\mB}_i$ and $\mB_S$ to zero.
For $\mD_i$, we differentiate between the layer-specific and layer-shared parts by scaling $\alpha$ proportionally by their respective ranks:
\vspace{-5pt}
\begin{equation}
    d_{j} = \begin{cases} 
                \frac{1}{2}\times\frac{\alpha}{r-k} & \text{if } j \leq r-k, \\
                \frac{1}{2}\times\frac{\alpha}{Lk} & \text{if } j > r-k.
            \end{cases} 
\end{equation}
\vspace{-20pt}
\paragraph{Same Dimension Assumption} RaSA assumes that all layers share the same dimensionality.
This holds for the vast majority of models (e.g. Llama~\citep{dubey2024llama}, Mistral~\citep{jiang2023mistral}).

\section{Reconstruction Error Analysis}
\label{sec:reconstruction-error-analysis}
While RaSA increases the effective rank of $\Delta \mW$, a higher rank does not necessarily guarantee improved expressive capacity.
For instance, a full-rank identity matrix can only perform the identity transformation.
To assess the expressive capacity of LoRA and RaSA, we compare their abilities to reconstruct a set of high-rank matrices $\{\mM_i\}_{i \in [L]}$, where $\mathrm{rank}(\mM_i)=R > r$.
Under the Frobenius norm, the \textbf{minimum reconstruction error (MRE) of LoRA} is defined as:
\begin{equation}
    \label{eq:lora-mre}
    e_{\mathrm{lora}} = \min_{\mB_i, \mA_i}{\sum_{i=1}^{L}\lVert\mM_{i}-\mB_i\mA_i\rVert_{F}^{2}}.
\end{equation}
According to the Eckart–Young–Mirsky theorem~\citep{eckart1936approximation}, we can perform singular value decomposition ($\mathrm{SVD}$) on $\mM_i$:
\begin{equation}
    \label{eq:svd-m}
    \mathrm{SVD}(\mM_i) = \sum_{j=1}^{R}\sigma^{(i)}_{j}\vu_{j}^{(i)}{\vv_{j}^{(i)}}^{T} (\sigma^{(i)}_{1} \geq \sigma^{(i)}_{2} \geq \cdots \geq \sigma^{(i)}_{R}).
\end{equation}
LoRA's optimal approximation is given by the first $r$ components of $\mathrm{SVD}(\mM_i)$, and $e_{\mathrm{lora}}$ becomes the sum of squares of the discarded singular values (those beyond the $r$-th one):
\begin{equation}
    \label{eq:lora-mre-svd}
    e_{\mathrm{lora}} = {\sum_{i=1}^{L}\lVert\mM_{i}-\sum_{j=1}^{r}\sigma^{(i)}_{j}\vu_{j}^{(i)}{\vv_{j}^{(i)}}^{T}\rVert_{F}^{2}} = \sum_{i=1}^{L}\sum_{j=r+1}^{R}{\sigma_{j}^{(i)}}^2. 
\end{equation}
Similarly, when each layer shares $k$ ranks out, we can define the \textbf{MRE of RaSA} as:
\begin{equation}
    e_{\mathrm{rasa}(k)} = \min_{\tilde{\mB}_{i}, \tilde{\mA}_{i},\mB_S, \mA_S, \mD_i}{\sum_{i=1}^{L}\lVert\mM_i-
    \begin{bmatrix}
        \tilde{\mB}_i & \mB_S
    \end{bmatrix}
    \mD_i
    \begin{bmatrix}
        \tilde{\mA}_i \\
        \mA_S 
    \end{bmatrix}
\rVert_{F}^{2}}.
\end{equation}
For simplicity, in this section we consider that $\mD_i$ operates only on the shared matrices $\mB_S$ and $\mA_S$, which does not affect the value of $e_{\mathrm{rasa}(k)}$:
\begin{equation}
    \label{eq:rasa-mre}
    e_{\mathrm{rasa}(k)} = \min_{\tilde{\mB}_{i}, \tilde{\mA}_{i},\mB_S, \mA_S, \mD_i}{\sum_{i=1}^{L}\lVert\mM_i-(\tilde{\mB}_{i}\tilde{\mA}_{i} + \mB_{S}\mD_{i}\mA_{S})\rVert_{F}^{2}}.
\end{equation}

\subsection{Theoretical Analysis}
\begin{restatable}{theorem}{rec}\label{thm:rec}
    $e_{\mathrm{rasa}(k)} \leq e_{\mathrm{lora}}$
\end{restatable}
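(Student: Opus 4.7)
The plan is to prove $e_{\mathrm{rasa}(k)} \leq e_{\mathrm{lora}}$ by an explicit construction: starting from any optimal LoRA parameterization $\{\mB_i^{\star}\mA_i^{\star}\}_{i\in[L]}$ attaining $e_{\mathrm{lora}}$, I will build a specific RaSA parameterization $(\tilde{\mB}_i,\tilde{\mA}_i,\mB_S,\mA_S,\mD_i)$ whose per-layer reconstruction matches $\mB_i^{\star}\mA_i^{\star}$ exactly. Since $e_{\mathrm{rasa}(k)}$ is the minimum over all such parameterizations, the value at this one feasible point upper-bounds $e_{\mathrm{rasa}(k)}$, yielding the inequality.

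\paragraph{Construction.}
First, I split each $\mB_i^{\star}\in\sR^{b\times r}$ column-wise into its first $r-k$ columns and its last $k$ columns, and analogously split $\mA_i^{\star}\in\sR^{r\times a}$ row-wise; call these pieces $\mB_i^{\star,\mathrm{spec}}, \mB_i^{\star,\mathrm{sh}}$ and $\mA_i^{\star,\mathrm{spec}}, \mA_i^{\star,\mathrm{sh}}$ so that $\mB_i^{\star}\mA_i^{\star}=\mB_i^{\star,\mathrm{spec}}\mA_i^{\star,\mathrm{spec}}+\mB_i^{\star,\mathrm{sh}}\mA_i^{\star,\mathrm{sh}}$. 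Set the layer-specific slots to $\tilde{\mB}_i:=\mB_i^{\star,\mathrm{spec}}$ and $\tilde{\mA}_i:=\mA_i^{\star,\mathrm{spec}}$. Build the shared pool by horizontally concatenating all the ``sh'' pieces,
\begin{equation*}
\mB_S := \begin{bmatrix}\mB_1^{\star,\mathrm{sh}} & \mB_2^{\star,\mathrm{sh}} & \cdots & \mB_L^{\star,\mathrm{sh}}\end{bmatrix},\qquad
\mA_S := \begin{bmatrix}\mA_1^{\star,\mathrm{sh}\,T} & \mA_2^{\star,\mathrm{sh}\,T} & \cdots & \mA_L^{\star,\mathrm{sh}\,T}\end{bmatrix}^{T}.
\end{equation*}
Finally, choose $\mD_i\in\sR^{Lk\times Lk}$ to be the diagonal $0/1$ selector that is $1$ on coordinates $(i-1)k+1,\dots,ik$ and $0$ elsewhere; this isolates the $i$-th block of the pool, so that $\mB_S\mD_i\mA_S=\mB_i^{\star,\mathrm{sh}}\mA_i^{\star,\mathrm{sh}}$. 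Combining the two terms gives $\tilde{\mB}_i\tilde{\mA}_i+\mB_S\mD_i\mA_S=\mB_i^{\star}\mA_i^{\star}$, so the residual $\mM_i-(\tilde{\mB}_i\tilde{\mA}_i+\mB_S\mD_i\mA_S)$ coincides with LoRA's optimal residual for every $i$, and the summed Frobenius error equals $e_{\mathrm{lora}}$.

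\paragraph{Main difficulty.}
The proof itself is not technically hard; the only conceptual hurdle is making sure that the construction is \emph{admissible} under the formulation of $e_{\mathrm{rasa}(k)}$ in \Cref{eq:rasa-mre}, in particular that $\mD_i$ is still a diagonal matrix (the $0/1$ selector is), and that the dimensions of $\mB_S$ and $\mA_S$ match the required $b\times(Lk)$ and $(Lk)\times a$. Once these bookkeeping checks are in place, the bound follows immediately because the minimization defining $e_{\mathrm{rasa}(k)}$ can only decrease the objective relative to the value attained by our explicit feasible point.
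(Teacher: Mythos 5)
Your proposal is correct and takes essentially the same approach as the paper: both exhibit a feasible RaSA point that reproduces each layer's optimal LoRA update exactly, by placing the last $k$ ranks of every layer into the shared pool and using the diagonal $\mD_i$ as a per-layer block selector, so the minimum defining $e_{\mathrm{rasa}(k)}$ is upper-bounded by $e_{\mathrm{lora}}$. The only cosmetic difference is that the paper instantiates the construction with the truncated SVD (unit singular vectors in $\mB_S,\mA_S$ and singular values inside $\mD_i$), whereas you split an arbitrary optimal LoRA factorization and use a $0/1$ selector, which is the same argument with the weights absorbed into the factors.
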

\begin{proof}
    To prove this, we construct a feasible solution for RaSA that achieves the same reconstruction error as LoRA's minimum error.
    This is done by distributing the ranks shared across layers in RaSA such that they cover the same rank range as the optimal LoRA solution.

    For each layer-$i$, we take the last $k$ components (corresponding to the indices $r-k+1$ through $r$) from the LoRA's optimal approximation (\Cref{eq:lora-mre-svd}), forming the following matrices:
    \begin{equation}
        \begin{aligned}
            \mU^{(i)} &= \begin{bmatrix}
                \vu^{(i)}_{r-k+1} & \vu^{(i)}_{r-k+2} & \cdots & \vu^{(i)}_{r}
            \end{bmatrix}, \\
            \mV^{(i)} &= \begin{bmatrix}
                \vv^{(i)}_{r-k+1} & \vv^{(i)}_{r-k+2} & \cdots & \vv^{(i)}_{r}
            \end{bmatrix}, \\
            \mSigma^{(i)} &= \begin{bmatrix}
                \sigma^{(i)}_{r-k+1} & \sigma^{(i)}_{r-k+2} & \cdots & \sigma^{(i)}_{r}
            \end{bmatrix}.
        \end{aligned}
    \end{equation}
    The shared matrices $\mB_S$ and $\mA_S$ are constructed by stacking $\mU^{(i)}$ and $\mV^{(i)}$ from each layer:
    \begin{equation}
        \label{eq:constuction-bs-as}
        \mB_S = \begin{bmatrix}
            \mU^{(1)} & \cdots & \mU^{(i)} & \cdots & \mU^{(L)}  
        \end{bmatrix},\quad
        \mA_S = \begin{bmatrix}
            \mV^{(1)} & \cdots & \mV^{(i)} & \cdots & \mV^{(L)}  
        \end{bmatrix}^{T}.
    \end{equation}
    Similarly, we define the diagonal matrix $\mD_i$ for each layer-$i$ by placing the corresponding singular values $\mSigma^{(i)}$ in their appropriate positions:
    \begin{equation}
        \label{eq:constuction-di}
        \mD_i = \mathrm{diag}(\begin{bmatrix}
        \vzero & \cdots & \mSigma^{(i)} & \cdots & \vzero
    \end{bmatrix}).
    \end{equation}
    Finally, the matrices $\tilde{\mB}_i$ and $\tilde{\mA}_i$ are formed from the first $r-k$ components of $\mathrm{SVD}(\mM_i)$:
    \begin{equation}
        \label{eq:constuction-ai-bi}
        \tilde{\mB}_i\tilde{\mA}_i = \sum_{j=1}^{r-k}\sigma_j^{(i)}\vu_{j}^{(i)}{\vv_j^{(i)}}^{T}.
    \end{equation}
    Substituting~\Cref{eq:constuction-ai-bi,eq:constuction-di,eq:constuction-bs-as} into~\Cref{eq:rasa-mre}, we derive the following:
    \begin{equation}
        \begin{aligned}
        e_{\mathrm{rasa}(k)} &\leq \sum_i^{L}\lVert\mM_i-(\tilde{\mB}_{i}\tilde{\mA}_{i} + {\mB}_{S}{\mD}_{i}{\mA}_{S})\rVert_{F}^{2}\\
        &= \sum_{i=1}^{L}\lVert\sum_{j=1}^{R}\sigma^{(i)}_{j}\vu_{j}^{(i)}{\vv_{j}^{(i)}}^{T} - (\sum_{j=1}^{r-k}\sigma^{(i)}_j\vu_{j}^{(i)}{\vv_{j}^{(i)}}^T + \sum_{j=r-k+1}^{r}\sigma^{(i)}_j\vu_{j}^{(i)}{\vv_{j}^{(i)}}^T )\rVert_{F}^{2} \\
        &= \sum_{i=1}^{L}\lVert\sum_{j=1}^{R}\sigma^{(i)}_{j}\vu_{j}^{(i)}{\vv_{j}^{(i)}}^{T}-\sum_{j=1}^{r}\sigma^{(i)}_j\vu_{j}^{(i)}{\vv_{j}^{(i)}}^T\rVert_{F}^{2} \\
        &= \sum_{i=1}^{L}\sum_{j=r+1}^{R}{\sigma_{j}^{(i)}}^2 \\
        &= e_{\mathrm{lora}}.
        \end{aligned}
    \end{equation}
Thus, we conclude that $e_{\mathrm{rasa}(k)} \leq e_{\mathrm{lora}}$, proving that RaSA can achieve equal or lower minimum reconstruction error compared to LoRA.
\end{proof}

\subsection{Empirical Analysis}
\label{sec:empirical-analysis}
While the previous theoretical analysis guarantees that RaSA can at least match the MRE of LoRA, it does not quantify how much RaSA improves upon LoRA.
To provide a more intuitive understanding of how RaSA achieves lower reconstruction error, we turn to an optimization-based analysis using coordinate descent.

\begin{figure}[t]
    \centering
    \includegraphics[width=0.88\linewidth]{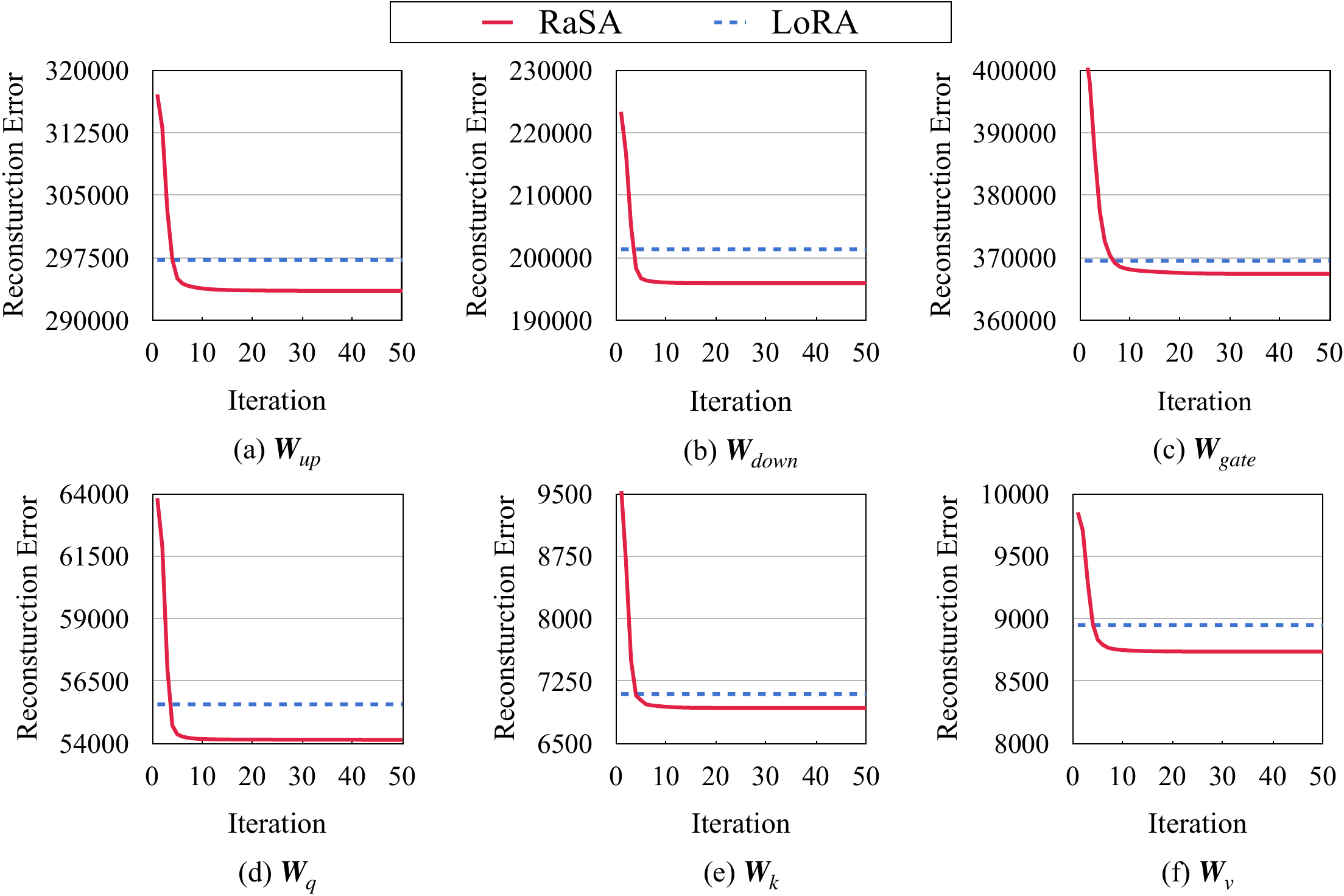}
    \caption{Reconstruction error curves of RaSA ($r=8, k=1$) during coordinate descent. We also plot the minimum reconstruction error of LoRA (\Cref{eq:lora-mre-svd}) for comparison.}
    \label{fig:cd}
\end{figure}
\paragraph{Empirical Validation}
Specifically, we instantiate the set of high-rank matrices $\{\mM_i\}_{i\in[L]}$ with the actual weight updates from model fine-tuning: $\{\Delta\mW_i\}_{i\in[L]}$, and iteratively minimize the reconstruction error in~\Cref{eq:rasa-mre} by adjusting the parameters of RaSA ($r=8, k=1$), namely the $\tilde{\mB}_i$, $\tilde{\mA}_i$, $\mB_S$, $\mA_S$, and $\mD_i$ (details can be found in~\cref{app:coordinate-descent-experiment}).
We apply this procedure to various kinds of linear modules within Llama-3.1-8B until convergence, and compute $e_{\mathrm{lora}}$ using~\Cref{eq:lora-mre-svd} as baseline values.

\figurename~\ref{fig:cd} shows that RaSA requires $\sim$10 iterations to achieve a significantly lower reconstruction error than LoRA’s minimum.
This pattern is consistent across all linear modules in the model, demonstrating the enhanced expressive capacity of RaSA.

\paragraph{Selection of $k$} RaSA introduces only one additional hyper-parameter, $k$, which controls how many ranks are taken from each layer to be shared across all layers.
When $k=0$, RaSA reduces to LoRA, where no ranks are shared.
On the other hand, when $k=r$, RaSA shares all ranks across layers, eliminating layer-specific low-rank updates and making the adaptation fully shared.
While this maximizes the effective rank of update, it may diminish layer diversity and the ability to capture layer-specific nuances.
We traversed $k$ from 0 to 8, and presents the converged reconstruction error from the previous coordinate descent experiment in \figurename~\ref{fig:select-k}.
The results indicate that a small value of $k$, around $r/8$, achieves the minimum error.
Further increasing $k$ can lead to a rise in reconstruction error, even exceeding that of LoRA.
This finding also indicates that some current methods that share all ranks across all layers, such as VeRA~\citep{kopiczko2024vera} and Tied-LoRA~\citep{renduchintala-etal-2024-tied}, might be sub-optimal and challenging to be optimized.

\begin{figure}[htpb]
    \centering
    \includegraphics[width=0.88\linewidth]{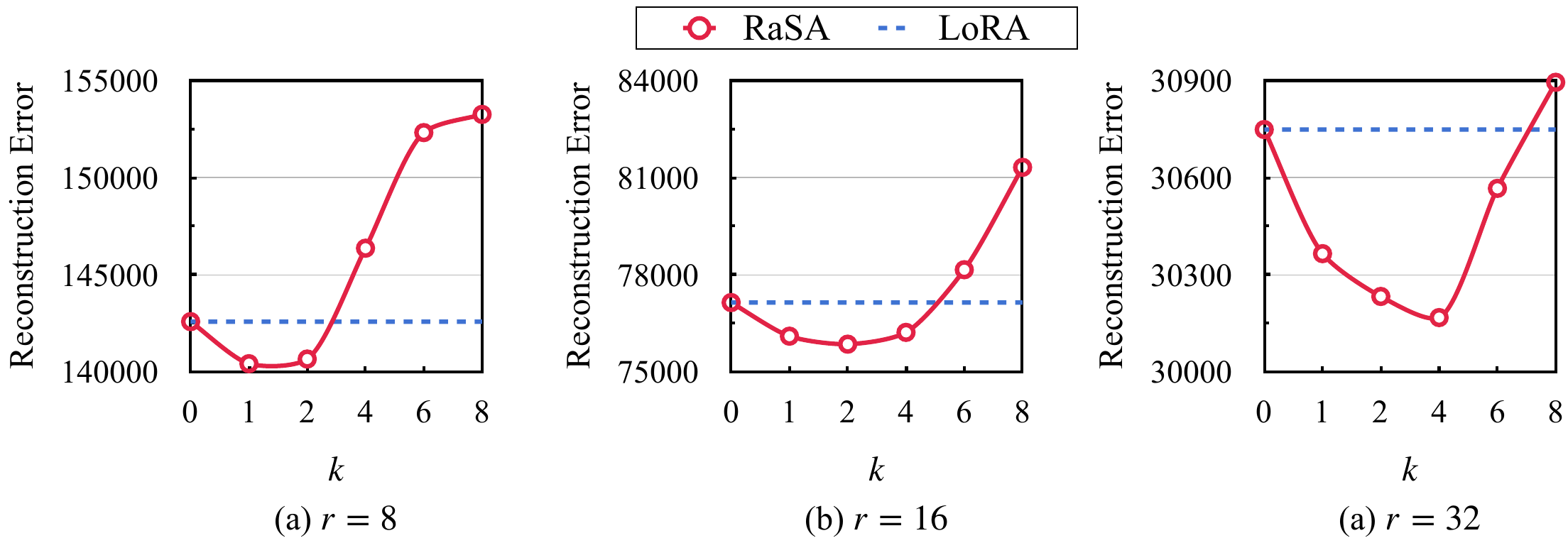}
    \caption{Reconstruction error comparison between RaSA and LoRA as a function of the shared rank parameter $k$. We also plot the minimum reconstruction error of LoRA (\Cref{eq:lora-mre-svd}) for comparison. The results are average across all linear modules in the model.}
    \label{fig:select-k}
\end{figure}

\section{Experiment}
\label{sec:experiment}

\subsection{Setup}

\paragraph{Tasks}
Our experiments generally align with those reported by ~\citet{biderman2024lora}. We applied all the methods to instruction fine-tuning and evaluated their performance on challenging tasks: code generation and mathematical reasoning.
While \citet{biderman2024lora} use Humaneval~\citep{chen2021humaneval} and GSM8K~\citep{cobbe2021gsm8k} as test sets, these two benchmarks have become saturated with the rapid growth of LLMs.
To provide a more rigorous evaluation, we adopted two more challenging benchmarks as test sets.
Since these two benchmarks lack validation sets, in addition to reporting the results from the last checkpoint, we also report the best results as a reference for the upper bound of each method.
Prompt templates for evaluation are provided in~\cref{app:prompt-templates}.
\begin{itemize}[leftmargin=10pt]
    \item {\em Code Generation}: We used Magicoder-Evol-Instruct-110k~\citep{pmlr-v235-wei24h} as the training data, a collection of programming question-answer (QA) pairs, which is a reproduced and decontaminated version of WizardCoder~\citep{luo2024wizardcoder}. We used Humaneval+~\citep{evalplus} as the test set, an extension of the Humaneval benchmark that scales the number of test cases by $80\times$. We used the Bigcode Evaluation Harness~\citep{bigcode-evaluation-harness} as the evaluation tool, sampling 50 solutions per problem with a temperature of 0.2, and report both Pass@1 and Pass@10.
    
    \item {\em Mathematical Reasoning}: We used MetaMathQA~\citep{yu2024metamath} as the training data, which comprises 395K QA pairs derived from the training sets of GSM8K~\citep{cobbe2021gsm8k} and MATH~\citep{hendrycks2021measuring}, rewritten by GPT-3.5. We used MATH~\citep{hendrycks2021measuring} as the test set, which consists of 5K competition-level mathematics problems covering 7 subjects and 5 difficulty levels. We followed the evaluation protocol from LLMs Evaluation Harness~\citep{eval-harness}, using \texttt{sympy} to verify correctness and employing greedy search for generation.

\end{itemize}

\paragraph{Baselines} We compare RaSA to the several representative PEFT methods:
\begin{itemize}[leftmargin=10pt]
    \item LoRA~\citep{hu2022lora} that learns only a low-rank perturbation to the pretrained weight matrix.   
    \item MoRA~\citep{jiang2024mora} that uses block diagonal matrices instead of low-rank matrices.
    \item VeRA~\citep{kopiczko2024vera} that fully shares the low-rank matrices across all layers with layer-specific weighting, and freeze the low-rank matrices during training to achieve extreme parameter efficient fine-tuning. Therefore, VeRA can set a higher rank-$r$ than LoRA.
\end{itemize}

\paragraph{LLMs \& Training Details}
We conducted experiments on two open-sourced LLMs: Llama-3.1-8B~\citep{dubey2024llama} and Mistral-0.3-7B~\citep{jiang2023mistral}.
Following common practice~\citep{kopiczko2024vera,jiang2024mora}, we used pre-trained models rather than instruction-tuned ones.
We applied PEFTs on all linear modules from attention ($\mW_{\text{q}},\mW_{\text{k}},\mW_{\text{v}},\mW_{\text{o}}$) and feed-forward networks ($\mW_{\text{up}},\mW_{\text{down}},\mW_{\text{gate}}$).
We set the model hyper-parameters based on the optimal configurations from \citet{biderman2024lora}, employing the decoupled LionW optimizer with a batch size of 192, and training for 8 epochs with a learning rate of 5e-4 by default.
For RaSA, we set $k = \max(r/8, 1)$ based on the analysis in \cref{sec:empirical-analysis}.
More details are provided in ~\cref{app:training-details}.

\subsection{Main Results}

In this section, we compare RaSA and baselines on two challenging domains -- code and math.

\begin{table}[t]
\caption{Performance on the code generation task (i.e. Humaneval+). We {\em italicize} the best result for each rank, and {\bf bold} the best result for each model. We also present training cost for each setting in terms of trainable parameters (\textbf{\# Trainable Param.}) and training time (\textbf{Time}).
Note that for MoRA and RaSA, $r$ does not correspond to the effective rank of the update matrix.}
\label{tab:main-humaneval-plus}
\begin{center}
\resizebox{1.0\linewidth}{!}{
\setlength{\tabcolsep}{4.8pt}
\begin{tabular}{r c r r cc cc r cc cc}
\toprule
\multirow{3}{*}{\bf r} & \multirow{3}{*}{\bf Method} & \multirow{3}{*}{\makecell[c]{\bf \# Trainable\\\bf Param.}} & \multicolumn{5}{c}{\bf Llama-3.1-8B} & \multicolumn{5}{c}{\bf Mistral-0.3-7B}\\
\cmidrule(lr){4-8} \cmidrule(lr){9-13}
 &  &   & \multirow{2}{*}{\bf Time}  & \multicolumn{2}{c}{\bf PASS@1} & \multicolumn{2}{c}{\bf PASS@10} &  \multirow{2}{*}{\bf Time}  & \multicolumn{2}{c}{\bf PASS@1} & \multicolumn{2}{c}{\bf PASS@10}  \\
\cmidrule(lr){5-6} \cmidrule(lr){7-8} \cmidrule(lr){10-11} \cmidrule(lr){12-13}
& & &              & BEST  & LAST & BEST  & LAST  & & BEST & LAST & BEST & LAST   \\
\midrule
1024               & VeRA  &  1.6M & 11.3h &   48.8 &    48.8 &    66.5 &    64.2 & 12.5h &   42.5 &    39.5 &    57.3 &    54.4 \\
\midrule
\multirow{3}{*}{8} & LoRA  & 21.0M &  9.6h &   56.1 &    53.0 &    71.2 &    68.5 & 10.7h &   42.6 &    39.7 &    57.7 &    54.8 \\
                   & MoRA  & 21.0M & 12.0h &   54.6 &    52.1 &    68.4 &    66.9 & 13.4h &   45.2 &    38.6 &    64.4 &    48.6 \\
                   & RaSA  & 21.0M & 11.2h &   \em 57.9 & \bf \em 56.9 &  \bf  \em 72.6 &    \em 69.6 & 12.1h &   \em 50.0 &   \em 49.0 &   \em 66.0 &  \em  64.2 \\
\midrule
\multirow{3}{*}{16}& LoRA  & 41.9M &  9.8h &   54.5 &    53.4 &    68.9 &    67.6 & 10.7h &   46.0 &    40.6 &    61.2 &    54.9 \\
                   & MoRA  & 41.9M & 12.7h &   56.3 &    52.9 &    69.5 &    65.6 & 14.0h &   43.4 &    41.0 &    59.4 &    56.0 \\
                   & RaSA  & 42.0M & 11.2h &   \em 57.3 &    \em 56.4 &   \em 72.1 &   \em 68.1 & 12.1h &   \em 53.6 &    \em 51.3 &    \em 68.5 &    \em 63.7 \\
\midrule
\multirow{3}{*}{32}& LoRA  & 83.9M & 10.0h &   57.9 &    \bf \em 56.9 &    69.8 &    69.2 & 10.8h &   50.2 &    44.4 &    64.4 &    57.0 \\
                   & MoRA  & 83.9M & 12.4h &   55.6 &    53.0 &    69.0 &    68.3 & 14.0h &   42.2 &    42.2 &    56.4 &    56.0 \\
                   & RaSA  & 83.9M & 11.5h &   \bf \em 59.5 &  56.2 &  \em  72.5 &  \bf \em  71.4 & 12.5h &   \bf \em 55.7 &   \bf \em 55.7 &   \bf \em 70.0 &   \bf \em 65.7 \\
\bottomrule
\end{tabular}}
\end{center}
\end{table}

\paragraph{Code Generation} 
Table~\ref{tab:main-humaneval-plus} presents the results on the Humaneval+ test set.
We compare RaSA and prior LoRA variants in terms of both efficiency and effectiveness.
Although VeRA adds only 1.6M extra parameters for $r=1024$, it results in a training time increase of between 13\% and 16\% over LoRA with $r=32$.
VeRA, however, is the least effective among all the variants due to its extreme strategy for parameter efficiency.
Both MoRA and RaSA add a comparable number of additional parameters as LoRA, yet MoRA requires more time due to the use of block diagonal matrices.

In terms of model performance, MoRA shows performance on par with LoRA, aligning with the findings reported in the original paper~\citep{jiang2024mora}. Our proposed RaSA surpasses all baseline models in nearly all scenarios. Like LoRA, RaSA's performance improves with rank, and at rank 32, RaSA typically delivers the strongest performance for both the Llama and Mistral models. RaSA achieves maximum Humaneval+ of 59.5\% PASS@1 with Llama-3.1-8B. 

\paragraph{Mathematical Reasoning}
\begin{table}[htpb]
\caption{Performance on mathematical reasoning task (i.e. MATH). We also present extra parameters (\textbf{\# Extra Param.}) used by AdaLoRA and PriLoRA for estimating parameter importance.}
\label{tab:main-math}
\begin{center}
\resizebox{0.8\linewidth}{!}{
\begin{tabular}{=r +r +r +r +r +c+c +r +c+c}
\toprule
\multirow{3}{*}{\bf r} & \multirow{3}{*}{\bf Method} & \multirow{3}{*}{\makecell[c]{\bf \# Trainable\\\bf Param.}} & \multirow{3}{*}{\makecell[c]{\bf \# Extra\\\bf Param.}} & \multicolumn{3}{c}{\bf Llama-3.1-8B} & \multicolumn{3}{c}{\bf Mistral-0.3-7B}\\
\cmidrule(lr){5-7} \cmidrule(lr){8-10}
& & & & \multirow{2}{*}{\bf Time} & \multicolumn{2}{c}{\bf ACC} & \multirow{2}{*}{\bf Time} & \multicolumn{2}{c}{\bf ACC} \\
\cmidrule(lr){6-7} \cmidrule(lr){9-10}
& & & & & BEST & LAST & & BEST & LAST\\
\midrule
--                 & FFT     & 7-8B  &        &       &  35.5 &  34.6  &       &  28.1 & 26.6\\
\midrule
1024               & VeRA    & 1.6M  & ---    &  22.4h&  27.4 &  25.6  &  17.6h&  19.9 & 19.4\\
\midrule
\multirow{3}{*}{8} & LoRA    & 21.0M & ---    &  20.1h&  28.3 &        26.7  &  14.6h&     20.1 &    19.2\\
                   & MoRA    & 21.0M & ---    &  23.6h&  29.2 &        28.9  &  19.6h&     21.4 &    21.4\\
                   & OLoRA   & 21.0M & ---    &  29.3h&  28.4 &        27.8  &  31.1h&     22.5 &    22.5\\
                   & AdaLoRA & 31.5M & 63.0M  &  27.9h&  \em30.4 &     28.9  &  17.7h&     22.5 &    21.6\\
                   & PriLoRA & 21.3M & 10.7M  &  24.1h&  28.2 &     \em29.5  &  15.9h&     22.3 &    22.3\\
                   & RaSA    & 21.0M & ---    &  23.8h&  30.3 &        29.1  &  15.9h&  \em24.3 & \em23.8\\
\midrule
\multirow{3}{*}{16}& LoRA    & 41.9M & ---    &  20.2h&     28.8 &     27.1  &  14.7h&     20.9 &    19.5\\
                   & MoRA    & 41.9M & ---    &  24.5h&     30.2 &     26.5  &  21.1h&     20.5 &    19.4\\
                   & OLoRA   & 41.9M & ---    &  29.6h&     28.6 &     28.4  &  35.9h&     22.5 &    22.2\\
                   & AdaLoRA & 62.9M & 125.8M &  28.1h&     29.7 &     29.4  &  17.6h&     23.5 &    23.2\\
                   & PriLoRA & 42.6M & 21.3M  &  24.5h&     28.6 &     29.4  &  15.9h&     22.7 &    21.6\\
                   & RaSA    & 42.0M & ---    &  24.4h&  \em31.4 &  \em29.8  &  15.8h&  \em25.9 & \bf\em25.1\\
\midrule
\multirow{3}{*}{32}& LoRA    & 83.9M & ---    &  20.6h&     28.9 &     27.2  &  14.8h&     21.8 &    20.4\\
                   & MoRA    & 83.9M & ---    &  24.7h&     28.6 &     25.8  &  20.5h&     18.4 &    18.4\\
                   & OLoRA   & 83.9M & ---    &  29.9h&     29.0 &     28.7  &  31.3h&     23.8 &    23.5\\
                   & AdaLoRA & 125.9M& 251.8M &  28.7h&     30.2 &     29.4  &  17.9h&     23.5 &    23.5\\
                   & PriLoRA & 85.2M & 42.6M  &  24.5h&     30.0 &     28.8  &  16.4h&     24.8 &    24.2\\
                   & RaSA    & 83.9M & ---    &  24.3h&  \bf \em31.7 &  \bf \em29.6  &  16.5h&  \bf\em26.1 & \bf\em25.1\\
\bottomrule
\end{tabular}
}
\end{center}
\end{table}
We add FFT and more PEFT baselines in math task:
\begin{itemize}[leftmargin=10pt]
    \item OLoRA~\citep{buyukakyuz2024olora} that uses QR decomposition to initialize the LoRA adapters.
    \item AdaLoRA~\citep{zhang2023adaptive} that dynamically allocates ranks among parameter matrices.
    \item PriLoRA~\citep{benedek-wolf-2024-prilora} that allocates a different rank for each layer in an increasing manner, and performs pruning throughout the training process.
\end{itemize}

The math results are presented in Table~\ref{tab:main-math}.
FFT outperforms all PEFT methods, aligning the findings from~\citet{biderman2024lora}.
Considering both training cost and accuracy, RaSA demonstrates consistent superiority over all PEFT baselines across various configurations.
Mistral notably falls short of its Llama counterpart, exhibiting a performance deficit of approximately 8\%, which RaSA is capable of narrowing down to 5\%.
We also observe that directly increasing the hyper-parameter $r$ yields only marginal performance gains, but at the cost of doubling the number of training parameters.
In contrast, RaSA greatly outperforms LoRA with the same or even fewer parameters ($\text{RaSA}_{r=8}> \text{LoRA}_{r=32}$).
This supports the notion introduced in~\cref{sec:introduction} that LoRA's parameters are underutilized.
RaSA, on the other hand, improves the utilization of parameters by sharing them across layers.

\subsection{RaSA Learns More and Forgets Less than LoRA}
\begin{figure}[htpb]
    \vspace{-10pt}
    \centering
    \includegraphics[width=\linewidth]{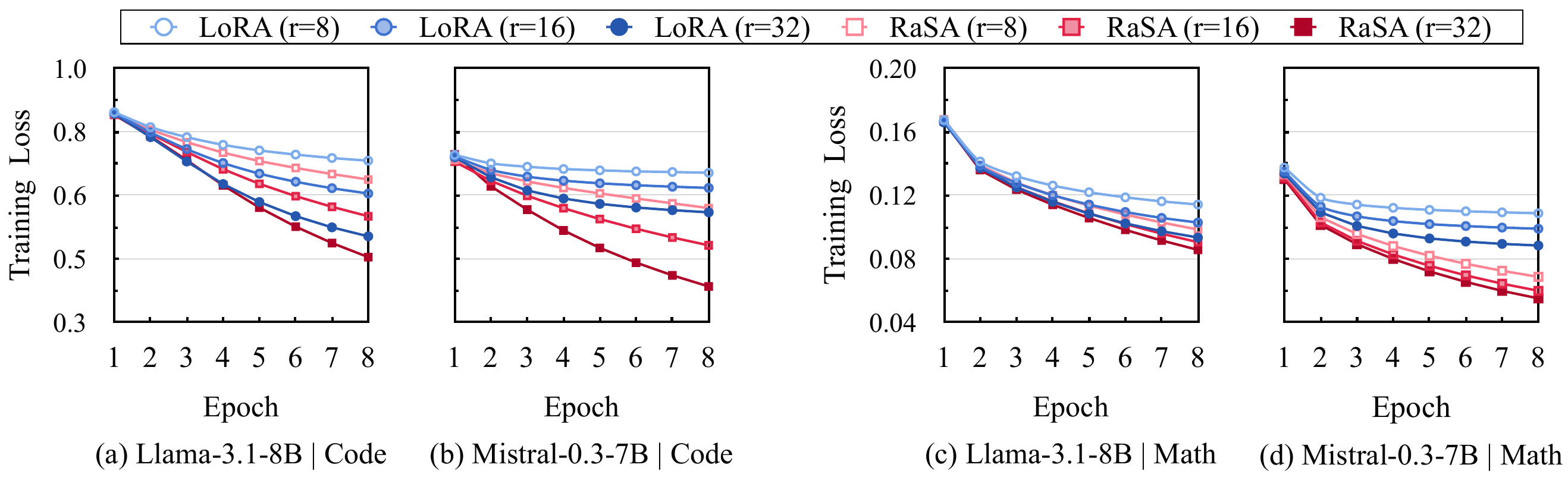}
    \caption{{\bf RaSA learns more and faster than LoRA}. Training curves of LoRA and RaSA with different ranks. RaSA consistently outperforms LoRA with the same rank across models and tasks.}
    \label{fig:training-loss}
\end{figure}

\paragraph{RaSA learns more and faster than LoRA}
Figure~\ref{fig:training-loss} illustrates the training curves of the fine-tuning process. Generally, the training losses for both RaSA and LoRA decrease as the rank increases. Notably, RaSA consistently outperforms its LoRA counterpart in terms of both learning effectiveness and efficiency across all cases, aligning with our empirical analysis presented in Section~\ref{sec:empirical-analysis}. These results collectively underscore the efficacy and universal applicability of the proposed RaSA method.
One interesting finding is that RaSA is specifically effective for \texttt{Mistral}: 
RaSA achieves comparable or potentially superior training outcomes to LoRA with a significantly lower rank requirement of 8, compared to LoRA's rank of 32.

\begin{figure}[h]
    \centering
    \includegraphics[width=\linewidth]{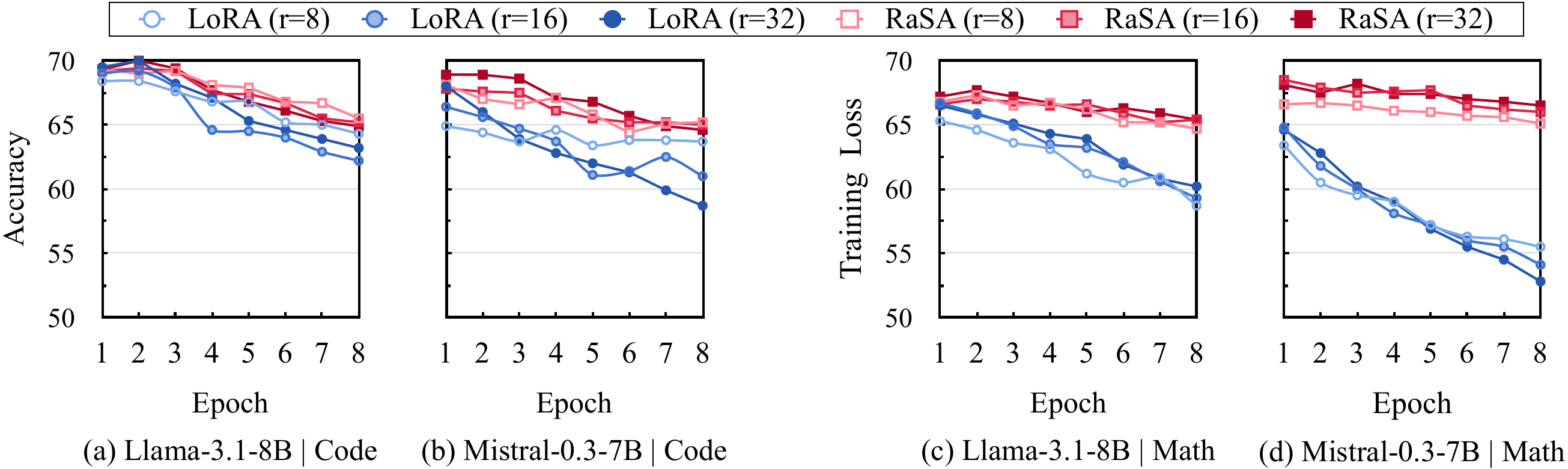}
    \caption{{\bf RaSA forgets less than LoRA}. Y-axis shows the average of prediction accuracy on three benchmarks 
    to evaluate model's forgetting. Higher prediction accuracy denotes less forgetting.}
    \label{fig:forget}
\end{figure}

\paragraph{RaSA forgets less than LoRA} We follow \cite{biderman2024lora} to investigate the extent of forgetting as degradation of base model capabilities.
Specifically, we calculate prediction accuracies on the following three benchmarks: (1) HellaSwag~\citep{zellers-etal-2019-hellaswag}: inference the most plausible continuation for daily events (70K problems); (2) WinoGrande~\citep{sakaguchi2019winogrande}: assesses commonsense reasoning (44K problems); (3) ARC-Challenge~\citep{allenai:arc}: complex reasoning and understanding of scientific concepts (7.8K problems).

Figure~\ref{fig:forget} presents the averaged forgetting curves of the three benchmarks, clearly showing that RaSA experiences less forgetting than LoRA, with RaSA's forgetting levels being less affected by rank changes compared to LoRA.
The difference in performance between $r=8$ and $r=32$ at epoch 8 stands at an average of 2.83\% for LoRA and 0.75\% for RaSA, indicating a smaller performance variation for RaSA.
LoRA is more prone to forgetting in math than code, while RaSA displays greater domain robustness.
Specifically, with $r=32$, Mistral scores 58.7\% in code and 52.8\% in math using LoRA, whereas RaSA shows a reduced performance difference between code (64.6\%) and math (66.5\%) domains, underscoring RaSA's robustness.

\subsection{Scaling Performance Analysis}

This section investigates the scaling characteristics of the RaSA approach by varying both the model size and the dataset size to assess its robustness.

\begin{wrapfigure}{r}{4.5cm}
    \centering
    \vspace{-15pt}
    \includegraphics[width=\linewidth]{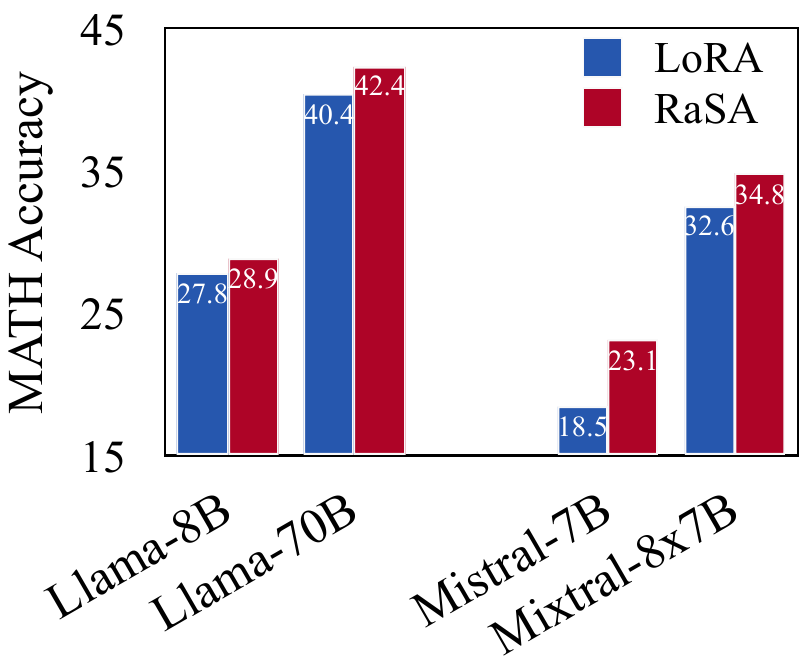}
    \caption{MATH performance of scaled models.}
    \vspace{-10pt}
    \label{fig:model-scaling-math}
\end{wrapfigure}
\paragraph{Model Scaling}
Initially, we evaluate RaSA's performance on an expanded scale by examining larger-scale models, including Llama-3.1-70B and Mixtral-8$\times$7B. Due to computational constraints, we employ a rank of $r=4$ for both models specifically in the domain of mathematical reasoning. For an equitable comparison, we present results for smaller models configured with $r=4$. Each model is trained over 2 epochs using both LoRA and RaSA techniques, with performance measured in terms of LAST accuracy.
The results in Figure~\ref{fig:model-scaling-math} reveal that increasing the model size substantially enhances performance for both LoRA and RaSA, across all model types. Noteworthy is the performance of larger Llama and Mistral models using LoRA, achieving MATH accuracies of 40.4\% and 32.6\%, respectively. 
These results significantly exceed those of their smaller counterparts under identical configurations and even surpass outcomes from variants with extended training (i.e., 8 epochs). Notably, RaSA consistently outperforms LoRA on these larger-scale models, underscoring RaSA's robustness in handling models of increased scale. 


\begin{wrapfigure}{r}{3.8cm}
    \centering
    \includegraphics[width=\linewidth]{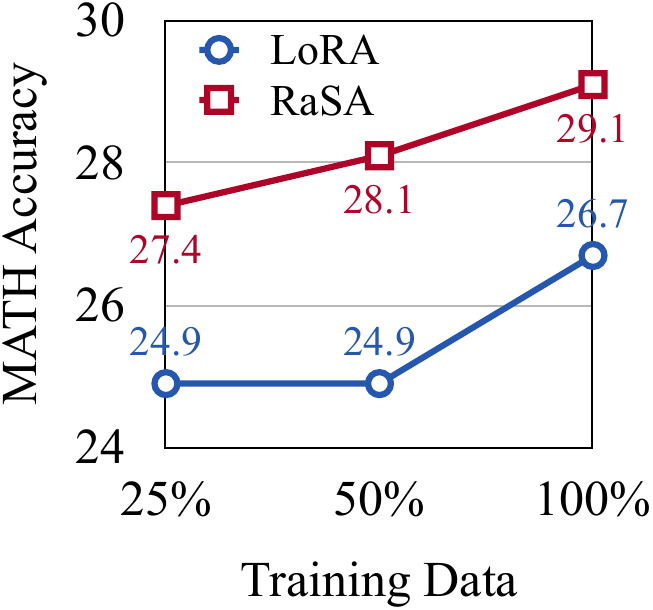}
    \caption{MATH performance of scaled data.}
    \vspace{-10pt}
    \label{fig:data-scaling-math}
\end{wrapfigure}

\paragraph{Data Scaling}
Subsequently, we explore the influence of training data size on RaSA's performance. We experiment with the Llama-3.1-8B model, applying a rank of \(r=8\) to facilitate efficient training. The examination involves random sampling of 25\% and 50\% instances from the SFT data for the mathematics reasoning task. Each model is trained over 8 epochs, with performance assessed through the LAST accuracy.
As illustrated in Figure~\ref{fig:data-scaling-math}, LoRA's performance seems contingent on the volume of training data, with no noticeable improvement when data is increased from 25\% to 50\%. 
This finding is consistent with the results in~\cite{biderman2024lora}.
In contrast, RaSA demonstrates a remarkable ability to enhance performance with an increase in training data volume. Impressively, with just 25\% of the training data, RaSA outperforms LaSA even when the latter utilizes the entire dataset, highlighting RaSA's exceptional efficiency in leveraging training data for performance improvement.

\section{Related Work}

\paragraph{Parameter-Efficient Fine-Tuning (PEFT)} PEFT methods aim to minimize the number of trainable parameters needed for fine-tuning large models, thus reducing memory and computational requirements.
Pioneering methods include adapter-based~\citep{pmlr-v97-houlsby19a} and prompt-based~\citep{lester-etal-2021-power,li-liang-2021-prefix} that introduce additional tunable adapter or prefix tokens to enable efficient fine-tuning while keeping the original model parameters fixed.
However, these approaches can slow down inference speed due to the extra components introduced.
LoRA overcomes this drawback by introducing low-rank matrices directly into the weight update process during fine-tuning, effectively reducing trainable parameters without increasing inference latency.
Due to its robust performance, LoRA and its variants have been widely used to adapt LLMs for specific tasks~\citep{yu2024metamath,xu2023paradigm,biderman2024lora,chen2024longlora,meng2024pissa,liu2024dora}.
\citet{benedek-wolf-2024-prilora} and \citet{zhang2023adaptive} show that the number of ranks required for each parameter matrix across the model's layers is not uniform.
Therefore, they propose dynamically assigning ranks based on the importance of parameters during training.
These rank-allocating approaches typically involve real-time estimation of parameter importance and pruning during the training process.
In contrast, RaSA uses a shared rank pool combined with layer-specific weighting, eliminating the need for complex importance estimation or pruning.
\citet{biderman2024lora} conduct a comprehensive empirical study on LoRA, and reveal that while LoRA still lags behind FFT, it exhibits less catastrophic forgetting. We show that our proposed RaSA forgets even less than LoRA, and learns more and faster.

\paragraph{Parameter Redundancy of LoRA} Although LoRA has significantly reduced the number of trainable parameters, recent research suggest that it is possible to further minimize these parameters without compromising performance.
\citet{kopiczko2024vera} achieve a 99\% reduction in LoRA parameters by fully sharing a pair of low-rank, frozen random matrices across all layers, adjusted with learnable scaling vectors.
\citet{koohpayegani2024nola} propose learning linear combinations of a set of random matrix bases, while \citet{li2024vb} push this further by replacing the matrix bases with a vector bank.
\citet{song2024sharelora} and \citet{renduchintala-etal-2024-tied} explore the effects of different sharing and selective fine-tuning strategies.
By sharing parameter spaces, \citet{brüelgabrielsson2024compressserveservingthousands} compress 1,000 LoRAs trained from different task, enabling more efficient serving.
These findings collectively suggest that LoRA's parameter has not been fully utilized and that different LoRAs exhibit similarities across layers, modules, and even different tasks.
Rather than focusing on extreme parameter reduction, this work aims to maintain the same parameter count while exploring how inter-layer sharing can enhance parameter utilization.
We theoretically and empirically demonstrate that sharing ranks across layers leads to lower reconstruction error and thus better expressive capacity.
\section{Conclusion}
In this study, we introduced RaSA, a novel extension to LoRA through an innovative partial rank sharing across layers. RaSA maintains the parameter efficiency and seamless integration into existing models characteristic of LoRA while substantially increasing the model's expressiveness. Through theoretical analysis, we established RaSA's superior capability in matrix reconstruction compared to traditional LoRA, underpinning its improved performance in downstream tasks. Empirical results on complex tasks such as code generation and mathematical reasoning have demonstrated its effectiveness over LoRA in high-demand scenarios. 
Future research directions may explore further optimization of rank-sharing schemes and the potential of RaSA in a broader range of applications, paving the way for the development of even more powerful and efficient PEFT strategies.

\subsubsection*{Acknowledgments}
This paper is supported by the General Program of National Natural Science Foundation of China (62176153), the CCF-Tencent Rhino-Bird Open Research Fund (RAGR20240107), and the Tencent AI Lab Fund (RBFR2024002).
The authors gratefully acknowledge the financial support from these funding agencies.

\bibliography{iclr2025_conference}
\bibliographystyle{iclr2025_conference}

\newpage

\appendix
\section{Coordinate Descent Experiment}
\label{app:coordinate-descent-experiment}
This section details the derivation of the coordinate descent experiment discussed in~\cref{sec:empirical-analysis}, inspired by~\citet{brüelgabrielsson2024compressserveservingthousands}.

Given the parameters of RaSA: $\{\tilde{\mB}_i,\tilde{\mA}_i,\mB_S,\tilde{\mD}_i,\mA_S\}_{i\in[L]}$, the reconstruction error of RaSA is defined as:
\begin{equation}
    E = {\sum_{i=1}^{L}\lVert\mM_i-(\tilde{\mB}_{i}\tilde{\mA}_{i} + \mB_{S}\mD_{i}\mA_{S})\rVert_{F}^{2}}.
\end{equation}
Clearly, $\tilde{\mB}_i$ and $\tilde{\mA}_i$ are independent across layers.
By applying the Eckart–Young–Mirsky theorem~\citep{eckart1936approximation}, we first compute the SVD of the residual matrix:
\begin{equation}
    \mathrm{SVD}(\mM_i-\mB_{S}\mD_{i}\mA_{S}) = \mU\mSigma\mV^T.
\end{equation}
Therefore, the update rules for $\tilde{\mB}_i$ and $\tilde{\mA}_i$ are:
\begin{equation}
    \label{eq:update-rule-bi-ai}
    \begin{aligned}
        \tilde{\mB}_i &= \mU_{[:,:r-k]}\mSigma^{\frac{1}{2}}_{[:r-k,:r-k]},\\
        \tilde{\mA}_i &= \mSigma^{\frac{1}{2}}_{[:r-k,:r-k]}{\mV_{[:,:r-k]}}^T.
    \end{aligned}
\end{equation}
Let the low-rank decomposition of $\mM_i$ be: $\mM_i=\hat{\mB}_i\hat{\mA}_i$, where $\hat{\mB}_i\in\sR^{b \times R}$ and $\hat{\mA}_i\in\sR^{R \times a}$.
Next, we compute the following gradients:
\begin{align}
    \nabla_{\mB_S} E &= \sum_{i=1}^{L}-2\left(\hat{\mB}_i\hat{\mA}_i-\left(\tilde{B}_i\tilde{A}_i+\mB_S\mD_i\mA_S\right)\right)\mA_S^T\mD_i^T, \\
    \nabla_{\mA_S} E &= \sum_{i=1}^{L}-2\left(\hat{\mB}_i\hat{\mA}_i-\left(\tilde{B}_i\tilde{A}_i+\mB_S\mD_i\mA_S\right)\right)\mB_S\mD_i, \\
    \nabla_{\mD_i} E &= -2\mB_S^T\left(\hat{\mB}_i\hat{\mA}_i-\left(\tilde{B}_i\tilde{A}_i+\mB_S\mD_i\mA_S\right)\right)\mA_S^T, \\
    \nabla_{\mathrm{diag}(\mD_i)} E &= \mathrm{diag}(\nabla_{\mD_i} E).
\end{align}
By setting these gradients to zeros, we obtain the following update rules:
\begin{align}
    \mB_S &= \left(
    \sum_{i=1}^{L}
            \begin{bmatrix}
                \hat{\mB}_i & -\tilde{\mB}_i
            \end{bmatrix}
            \begin{bmatrix}
                \hat{\mA}_i \\
                \tilde{\mA}_i
            \end{bmatrix}
        \mA_S^{T}\mD_i^{T}
    \right)
    \left(
        \sum_{i=1}^{L}\mD_i\mA_S\mA_S^T\mD_i^T
    \right)^{-1}\label{eq:update-rule-bs},\\
    \mA_S &= \left[\left(
    \sum_{i=1}^{L}
        \left(
            \begin{bmatrix}
                \hat{\mB}_i & -\tilde{\mB}_i
            \end{bmatrix}
            \begin{bmatrix}
                \hat{\mA}_i \\
                \tilde{\mA}_i
            \end{bmatrix}
        \right)^T
        \mB_S\mD_i
    \right)
    \left(
        \sum_{i=1}^{L}\mD_i^T\mB_S^T\mB_S\mD_i
    \right)^{-1}\right]^T\label{eq:update-rule-as},\\
    \mathrm{diag}(\mD_i) &= \left(\mB^T_S\mB_S\circ\mA_S\mA_S^T\right)^{-1}\left(\mB^T_S\begin{bmatrix}\hat{\mB}_i & -\tilde{\mB}_i\end{bmatrix}\circ\mA_S\begin{bmatrix}\hat{\mA}_i \\\tilde{\mA}_i\end{bmatrix}^T\right)\vone.\label{eq:update-rule-di}
\end{align}
In coordinate descent, we iteratively apply~\Cref{eq:update-rule-bi-ai,eq:update-rule-bs,eq:update-rule-as,eq:update-rule-di} until convergence.

\newpage
\section{Prompt Templates}
\label{app:prompt-templates}
\begin{figure}[ht]
\begin{Verbatim}[breaklines=true, breaksymbol={}]
Below is an instruction that describes a task. Write a response that appropriately completes the request.

### Instruction:
{QUESTION}

### Response:
Let's think step by step.
\end{Verbatim}
\caption{Evaluation prompt for mathematical reasoning.}
\end{figure}

\begin{figure}[ht]
\begin{Verbatim}[breaklines=true, breaksymbol={}]
Below is an instruction that describes a task. Write a response that appropriately completes the request.

### Instruction:
{QUESTION}

### Response:
{IMPORT SECTION}

{FUNCTION SIGNATURE}
{DOCSTRING}
\end{Verbatim}
\caption{Evaluation prompt for code generation.}
\end{figure}

\section{Training and Data Details}
\label{app:training-details}
\paragraph{Training} We mostly aligned our training configurations with the optimal configurations from \citet{biderman2024lora}.
For LoRA, we used the decoupled LionW optimizer with a batch size of 192, training for 8 epochs with a learning rate of 5e-4.
A cosine learning rate scheduler was applied, with the first 10\% of training steps used for warmup, and weight decay set to zero.
Training and evaluation were conducted using bfloat16 precision.
While \citet{biderman2024lora} set $\alpha = 32$ for both math and code tasks, we reduced $\alpha$ to 8 for the math task due to convergence issues observed with the Mistral model when using $\alpha = 32$.
RaSA training fully inherits all hyper-parameters from LoRA training.
For MoRA, we used a learning rate of 3e-4, as reported in the original work~\citep{jiang2024mora}.
For VeRA, following the original paper, we set the learning rate to 10 times that of LoRA, resulting in 5e-3~\citep{kopiczko2024vera}.
All experiments for the 7-8B models were conducted on 1 node $\times$ 8 $\times$ A100-40G GPUs.
For the 70B and MoE models, we used 8 nodes.

\paragraph{Data} During training, we grouped data by length, which significantly accelerated the training process.
All math training data ends with ``The answer is: \{ANSWER\}'', helping answer extraction during evaluation.

\end{document}